\acrodef{fgsm}[FGSM]{fast gradient sign method}
\acrodef{pgd}[PGD]{projected gradient descent}
\acrodef{rgb}[RGB]{red, green, blue}
\acrodef{hsv}[HSV]{hue, saturation, value}
\acrodef{lpips}[LPIPS]{Learned Perceptual Image Patch Similarity}
\newcommand{\modelname}{ReColorAdv\xspace}
\newcommand{\tmname}{functional\xspace}
\newcommand{\Tmname}{Functional\xspace}
\newcommand{\tm}{{func}}
\newcommand{\tmfunc}{f}
\newcommand{\example}{\mathbf{x}}
\newcommand{\advexample}{\widetilde{\mathbf{x}}}
\newcommand{\clf}{g}
\newcommand{\funcfam}[1]{\mathcal{F}_\text{#1}}
\DeclareMathOperator*{\argmin}{arg\,min}
\newtheorem{theorem}{Theorem}
\newcommand{\smallparagraph}[1]{\textbf{#1}\quad}
\newcolumntype{Y}{>{\centering\arraybackslash}X}
\newcolumntype{b}{>{\centering\arraybackslash}X}
\newcolumntype{s}{>{\hsize=.5\hsize\centering\arraybackslash}X}
\title{\Tmname Adversarial Attacks}
\author{%
  Cassidy Laidlaw \\
  University of Maryland \\
  \texttt{claidlaw@umd.edu} \\
  \And
  Soheil Feizi \\
  University of Maryland \\
  \texttt{sfeizi@cs.umd.edu} \\
}
\begin{document}

\maketitle

\begin{abstract}

We propose \textit{\tmname adversarial attacks}, a novel class of threat models for crafting adversarial examples to fool machine learning models. Unlike a standard $\ell_p$-ball threat model, a \tmname adversarial threat model allows only a \textit{single} function to be used to perturb input features to produce an adversarial example. For example, a \tmname adversarial attack applied on colors of an image can change \textit{all} red pixels simultaneously to light red. Such global uniform changes in images can be less perceptible than perturbing pixels of the image individually. For simplicity, we refer to \tmname adversarial attacks on image colors as \modelname, which is the main focus of our experiments. We show that \tmname threat models can be combined with existing additive ($\ell_p$) threat models to generate stronger threat models that allow both small, individual perturbations and large, uniform changes to an input. Moreover, we prove that such combinations encompass perturbations that would not be allowed in either constituent threat model. In practice, \modelname can significantly reduce the accuracy of a ResNet-32 trained on CIFAR-10. Furthermore, to the best of our knowledge, combining \modelname with other attacks leads to the strongest existing attack even after adversarial training.


\end{abstract}

\section{Introduction}

There is an extensive recent literature on \textit{adversarial examples}, small perturbations to inputs of machine learning algorithms that cause the algorithms to report an erroneous output, e.g. the incorrect label for a classifier. Adversarial examples present serious security challenges for real-world systems like self-driving cars, since a change in the environment that is not noticeable to a human may cause unexpected, unwanted, or dangerous behavior. Many methods of generating adversarial examples (called \textit{adversarial attacks}) have been proposed \citep{szegedy_intriguing_2014,goodfellow_explaining_2015,moosavi-dezfooli_deepfool:_2016,papernot_limitations_2016,carlini_towards_2017}. Defenses against such attacks have also been explored \citep{papernot_distillation_2016,madry_towards_2018,zhang_theoretically_2019}. 

Most existing attack and defense methods consider a threat model of adversarial attacks where adversarial examples can differ from normal inputs by a small $\ell_p$ distance. However, using this threat model that encompasses a simple definition of "small perturbation" misses other types of perturbations that may also be imperceptible to humans. For instance, small spatial perturbations have been used to generate adversarial examples \citep{engstrom_rotation_2017,xiao_spatially_2018,wong_wasserstein_2019}.

In this paper, we propose a new class of threat models for adversarial attacks, called \textit{\tmname threat models}. Under a \tmname threat model, adversarial examples can be generated from a regular input to a classifier by applying a \textit{single} function to all features of the input:
\begin{equation*}
\begin{aligned}
    \text{Additive threat model: \quad}
    & (x_1, \dots, x_n) & \rightarrow \quad
    &  (x_1 + \delta_1, \dots, x_n + \delta_n) \\
    \text{{\bf \Tmname threat model:} \quad}
    & (x_1, \dots, x_n) & \rightarrow \quad
    & (f(x_1), \dots, f(x_n)) \\
\end{aligned}
\end{equation*}
For instance, the perturbation function $\tmfunc(\cdot)$ could darken every red pixel in an image, or increase the volume of every timestep in an audio sample. \Tmname threat models are in some ways more restrictive because features cannot be perturbed individually. However, the uniformity of the perturbation in a \tmname threat model makes the change less perceptible, allowing for larger absolute modifications. For example, one could darken or lighten an entire image by quite a bit without the change becoming noticeable. This stands in contrast to separate changes to each pixel, which must be smaller to avoid becoming perceptible. We discuss various regularizations that can be applied to the perturbation function $\tmfunc(\cdot)$ to ensure that even large changes are imperceptible.

The advantages and disadvantages of additive ($\ell_p$) and \tmname threat models complement each other; additive threat models allow small, individual changes to every feature of an input while \tmname threat models allow large, uniform changes. Thus, we combine the threat models (see figure \ref{fig:threat_model}) and show that the combination encompasses more potential perturbations than either one separately, as we explain in the following theorem which is stated more precisely in section \ref{combine_tm}.

\begin{theorem}[informal]
Let $\example$ be a grayscale image with $n \geq 2$ pixels. Consider an additive threat model that allows changing each pixel by up to a certain amount, and a \tmname threat model that allows darkening or lightening the entire image by a greater amount. Then the combination of these threat models allows potential perturbations that are not allowed in either constituent threat model.
\end{theorem}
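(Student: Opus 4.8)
The plan is to make the informal statement precise as a set inclusion and then exhibit a single explicit witness perturbation lying in the combined threat model but in neither constituent. Write the additive model as the set of $\advexample$ with $\|\advexample-\example\|_\infty\le\alpha$; the \tmname model as the set of $(\tmfunc(x_1),\dots,\tmfunc(x_n))$ where $\tmfunc$ ranges over the admissible regularized functions, so in particular $|\tmfunc(t)-t|\le\beta$ for all $t$, plus whatever smoothness or monotonicity constraint is imposed, and with $\beta>\alpha$; and the combined model as the set of $(\tmfunc(x_1)+\delta_1,\dots,\tmfunc(x_n)+\delta_n)$ with $\tmfunc$ admissible and $\max_i|\delta_i|\le\alpha$. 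The goal is purely set-theoretic: produce an element of the third set that lies outside the union of the first two.

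For the witness I would take the globally uniform lightening $\tmfunc(t)=t+\beta$ (this is monotone and maximally smooth, hence admissible under any reasonable regularization, and it saturates the functional budget), and then apply the additive perturbation that adds $+\alpha$ to a single pixel, say pixel $1$, and $0$ to every other pixel; one chooses pixel $1$ and, if necessary, the sign of the shift so that the perturbed intensities stay in range, which is possible for at least one pixel whenever $\alpha+\beta$ is not large relative to the intensity range. The resulting perturbation has $\widetilde{x}_1=x_1+\alpha+\beta$ and $\widetilde{x}_j=x_j+\beta$ for $j\ge2$, and it is in the combined model by construction.

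Two checks finish the argument. It is not in the additive model: every pixel has moved by at least $\beta>\alpha$, so no choice of $\delta_i$ with $|\delta_i|\le\alpha$ can reproduce it. It is not in the \tmname model: any admissible $\tmfunc'$ reproducing it would need $\tmfunc'(x_1)=x_1+\alpha+\beta$, i.e.\ $|\tmfunc'(x_1)-x_1|=\alpha+\beta>\beta$, violating the functional budget no matter what the smoothness or monotonicity constraint is. If instead one uses a regularization with no explicit magnitude bound --- for instance pure order-preservation --- the same idea works with a twist: after the uniform \tmname shift, use the additive budget to add $+\alpha$ to pixel $1$ and $-\alpha$ to a second pixel whose value is within $2\alpha$ of pixel $1$'s, which reverses their relative order; no monotone $\tmfunc'$ can do this, and pixel $1$ still moves by more than $\alpha$. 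This variant is where the hypothesis $n\ge2$ is genuinely used --- a single pixel carries no order (nor any coincidence with another pixel) to exploit --- which is presumably why the statement assumes it.

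The main obstacle is not building the witness but pinning down exactly what the \tmname model can reach, so that the ``not in $\mathcal{F}$'' step is airtight. A single function applied coordinatewise to an image with distinct pixel values is deceptively powerful: it behaves like an arbitrary per-coordinate perturbation subject only to the magnitude budget, so the argument must lean on precisely the feature that separates the \tmname ball from an additive one --- its budget $\beta$, or, when pixel values coincide or are close, its order/equality constraints. A secondary but routine point is the boundary handling above: choosing the witness pixel and shift direction so that all perturbed values remain valid, with the all-pixels-equal case falling back on the two-pixel construction.
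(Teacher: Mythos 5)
Your proof is correct and follows the same overall strategy as the paper's: exhibit an explicit witness that applies the maximal uniform \tmname perturbation to every pixel and then spends the full additive budget on a single pixel, and rule out each constituent model by examining individual pixels. The difference lies in how you each formalize ``darkening or lightening by a greater amount.'' The paper uses a multiplicative one-parameter family $\tmfunc(x) = c\,x$ with $c \in [1-\epsilon_2, 1+\epsilon_2]$; as a result its ``not in $t_\text{\tm}$'' step is an inconsistency argument (pixel $2$ pins down $c = 1-\epsilon_2$, and pixel $1$ then forces $0 = \epsilon_1$), which genuinely uses $n \geq 2$, and its hypothesis that some pixel satisfies $x_i > \epsilon_1/\epsilon_2$ is what makes the multiplicative displacement $\epsilon_2 x_i$ exceed the additive budget $\epsilon_1$. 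You instead use a uniform additive shift bounded by $\beta > \alpha$, so both exclusions reduce to pure budget violations at a single pixel; your main construction does not actually need $n \geq 2$ (your order-reversal variant is where it enters, and your diagnosis of why is reasonable, though it is not the role $n\geq 2$ plays in the paper). The one soft spot in your version is the range issue: a literal $\tmfunc(t) = t + \beta$ does not map $[0,1]$ into $[0,1]$, so you need either clipping (e.g.\ $\tmfunc(t)=\min(t+\beta,1)$, which preserves the budget bound and still leaves pixel $1$ displaced by $\alpha+\beta$) or a hypothesis that some pixel is not too bright/dark --- the exact analogue of the paper's ``not too dark'' condition. You flag this but leave it slightly informal; making it precise costs one line and does not threaten the argument.
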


\begin{figure}[t]
    \centering
    \includegraphics[width=0.9\columnwidth]{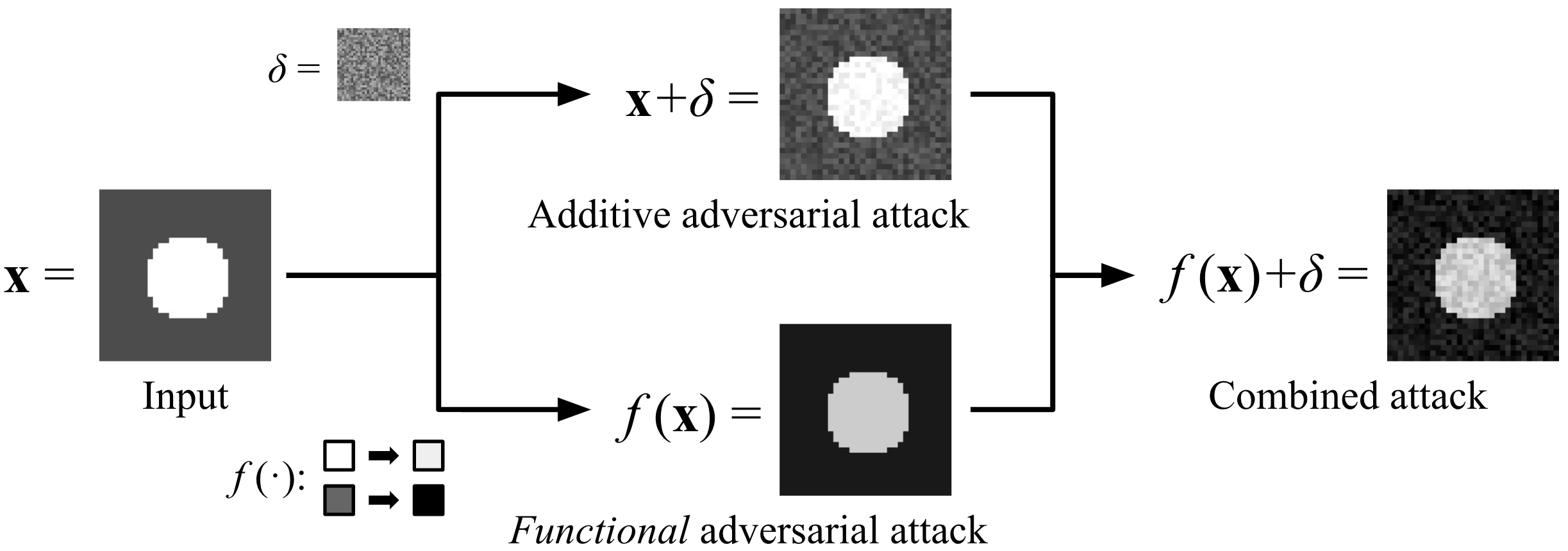}
    \caption{A visualization of an additive adversarial attack, a \tmname adversarial attack, and their combination. The additive attack perturbs each feature (pixel) separately, whereas the \tmname attack applies the same function $\tmfunc(\cdot)$ to every feature.}
    \label{fig:threat_model}
\end{figure}

\Tmname threat models can be used in a variety of domains such as images (e.g. by uniformly changing image colors), speech/audio (e.g. by changing the "accent" of an audio clip), text (e.g. by replacing a word in the entire document with its synonym), or fraud analysis (e.g. by uniformly modifying an actor's financial activities). Moreover, because \tmname perturbations are large and uniform, they may also be easier to use for physical adversarial examples, where the small pixel-level changes created in additive perturbations could be drowned out by environmental noise. 

In this paper, we will focus on one such domain---images---and define \modelname, a \tmname adversarial attack on pixel colors (see figure \ref{fig:examples_imagenet}). In \modelname, we use a flexibly parameterized function $\tmfunc$ to map each pixel color $c$ in the input to a new pixel color $f(c)$ in an adversarial example. We regularize $\tmfunc(\cdot)$ both to ensure that no color is perturbed by more than a certain amount, and to make sure that the mapping is smooth, i.e. similar colors are perturbed in a similar way. We show that \modelname can use colors defined in the standard \ac{rgb} color space and also in CIELUV color space, which results in less perceptually different adversarial examples (see figure \ref{fig:color_space}).

We experiment by attacking defended and undefended classifiers with \modelname, by itself and in combination with other attacks. We find that \modelname is a strong attack, reducing the accuracy of a ResNet-32 trained on CIFAR-10 to 3.0\%. Combinations of \modelname and other attacks are yet more powerful; one such combination lowers a CIFAR-10 classifier's accuracy to 3.6\%, even after adversarial training. This is lower than the previous strongest attack of \citet{jordan_quantifying_2019}. We also demonstrate the fragility of adversarial defenses based on an additive threat model by reducing the accuracy of a classifier trained with TRADES \citep{zhang_theoretically_2019} to 5.7\%. Although one might attempt to mitigate the \modelname attack by converting images to grayscale before classification, which removes color information, we show that this simply decreases a classifier's accuracy (both natural and adversarial). Furthermore, we find that combining \modelname with other attacks improves the strength of the attack without increasing the perceptual difference, as measured by LPIPS \citep{zhang_unreasonable_2018}, of the generated adversarial example.




Our contributions are summarized as follows:

\begin{itemize}[leftmargin=*]
    \item We \textbf{introduce} a novel class of threat models, \tmname adversarial threat models, and combine them with existing threat models. We also describe ways of regularizing \tmname threat models to ensure that generated adversarial examples are imperceptible.
    \item \textbf{Theoretically}, we prove that additive and \tmname threat models combine to create a threat model that encompasses more potential perturbations than either threat model alone. 
    \item \textbf{Experimentally}, we show that \modelname, which uses a \tmname threat model on images, is a strong adversarial attack against image classifiers. To the best of our knowledge, combining \modelname with other attacks leads to the strongest existing attack even after adversarial training.
\end{itemize}

\begin{figure}[t]
    \centering
    \includegraphics[width=0.9\columnwidth]{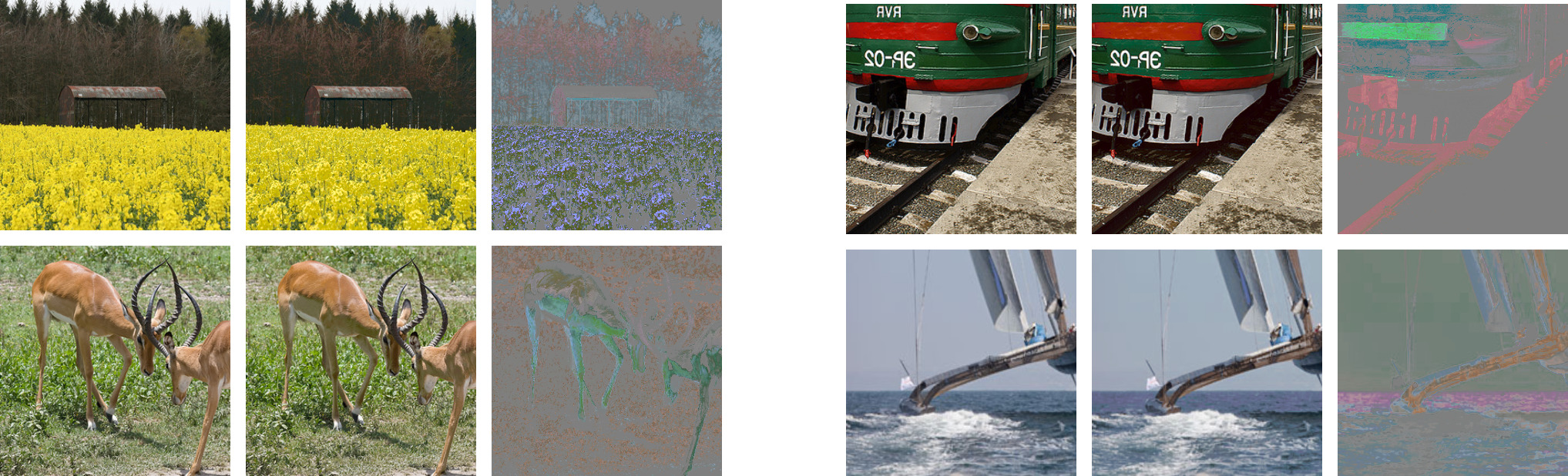}
    \caption{Four ImageNet adversarial examples generated by \modelname against an Inception-v4 classifier. From left to right in each group: original image, adversarial example, magnified difference.
    }
    \label{fig:examples_imagenet}
\end{figure}

\section{Review of Existing Threat Models}

In this section, we define the problem of generating an adversarial example and review existing adversarial threat models and attacks.

\smallparagraph{Problem Definition} Consider a classifier $\clf: \mathcal{X}^n \rightarrow \mathcal{Y}$ from a feature space $\mathcal{X}^n$ to a set of labels $\mathcal{Y}$. Given an input $\example \in \mathcal{X}^n$, an adversarial example is a slight perturbation $\advexample$ of $\example$ such that $\clf(\advexample) \neq \clf(\example)$; that is, $\advexample$ is given a different label than $\example$ by the classifier. Since the aim of an adversarial example is to be perceptually indistinguishable from a normal input, $\advexample$ is usually constrained to be close to $\example$ by some \textit{threat model}. Formally, \citet{jordan_quantifying_2019} define a threat model as a function $t: \mathcal{P}(\mathcal{X}^n) \rightarrow \mathcal{P}(\mathcal{X}^n)$, where $\mathcal{P}$ denotes the power set. The function $t(\cdot)$ maps a set of classifier inputs $\mathcal{S}$ to a set of perturbed inputs $t(\mathcal{S})$ that are imperceptibly different.
With this definition, we can formalize the problem of generating an adversarial example from an input:
\begin{equation*}
    \text{find } \advexample \quad \text{such that } \;
    \clf(\advexample) \neq \clf(\example) \; \text{ and } \;
    \advexample \in t(\{\example\})
\end{equation*}

\smallparagraph{Additive Threat Model} The most common threat model used when generating adversarial examples is the additive threat model. Let $\example = (x_1, \dots, x_n)$, where each $x_i \in \mathcal{X}$ is a feature of $\example$. For instance, $x_i$ could correspond to a pixel in an image or the filterbank energies for a timestep in an audio sample. In an additive threat model, we assume $\advexample = (x_1 + \delta_1, \dots, x_n + \delta_n)$; that is, a value $\delta_i$ is added to each feature of $\example$ to generate the adversarial example $\advexample$. Under this threat model, perceptual similarity is usually enforced by a bound on the norm of $\delta = (\delta_1, \dots, \delta_n)$. Thus, the additive threat model is defined as
\begin{equation*}
    t_\text{add}(\mathcal{S}) \triangleq \left\{(x_1 + \delta_1, \dots, x_n + \delta_n) \mid (x_1, \dots, x_n) \in \mathcal{S}, \|\delta\| \leq \epsilon\right\}.
\end{equation*}
Commonly used norms include $\|\cdot\|_2$ (Euclidean distance), which constrains the sum of squares of the $\delta_i$, $\|\cdot\|_0$, which constrains the number of features can be changed, and $\|\cdot\|_\infty$, which allows changing each feature by up to a certain amount. Note that all of the $\delta_i$ can be modified individually to generate a misclassification, as long as the norm constraint is met. Thus, a small $\epsilon$ is usually necessary because otherwise the input could be made incomprehensible by noise.

Most previous work on generating adversarial examples has employed the additive threat model. This includes gradient-based methods like FGSM \citep{goodfellow_explaining_2015}, DeepFool \citep{moosavi-dezfooli_deepfool:_2016}, and Carlini \& Wagner \citep{carlini_towards_2017}, and gradient-free methods like SPSA \citep{uesato_adversarial_2018} and the Boundary Attack \citep{brendel_decision-based_2018}.

\smallparagraph{Other Threat Models} Some recent work has focused on \textit{spatial threat models}, which allow for slight perturbations of the locations of features in an input rather than perturbations of the features themselves \citep{xiao_spatially_2018,wong_wasserstein_2019,engstrom_rotation_2017}. Others have proposed threat models based on properties of a 3D renderer \citep{zeng_adversarial_2019} and constructing adversarial examples with a GAN \citep{song_constructing_2018}. Finally, some research has focused on coloring-based threat models through modification of an image's hue and saturation \citep{hosseini_semantic_2018}, inverting images \citep{hosseini_limitation_2017}, using a colorization network \citep{bhattad_big_2019}, and applying an affine transformation to colors followed by PGD \citep{zhang_limitations_2019}. See appendix \ref{other_tm} for a discussion of non-additive threat models and comparison to our proposed \tmname threat model.

\section{\Tmname Threat Model}
\label{trans_tm}

In this section, we define \textit{\tmname threat model} and explore its combinations with existing threat models. Recall that in the additive threat model, each feature of an input can only be perturbed by a small amount. Because all the features are changed separately, larger changes could make the input unrecognizable. Our key insight is that larger perturbations to an input should be possible if the dependencies between features are considered.


Unlike the additive threat model, in the \tmname threat model the features $x_i$ are transformed by a single function $\tmfunc: \mathcal{X} \rightarrow \mathcal{X}$, called the perturbation function. That is,
\begin{equation*}
    \advexample = f(\example)= (\tmfunc(x_1), \dots, \tmfunc(x_n))
\end{equation*}
Under this threat model, features which have the same value in the input must be mapped to the same value in the adversarial example.
Even large perturbations allowed by a \tmname threat model may be imperceptible to human eyes because they preserve dependencies between features (for example, shape boundaries and shading in images, see figure \ref{fig:threat_model}). Note that the features $x_i$ which are modified by the perturbation function $\tmfunc(\cdot)$ need not be scalars; depending on the application, vector-valued features may be useful.

\subsection{Regularizing Functional Threat Models}
\label{reg_tm}

In the \tmname threat model, various regularizations can be used to ensure that the change remains imperceptible. In general, we can enforce that $\tmfunc \in \funcfam{}$; $\funcfam{}$ is a family of allowed perturbation functions. For instance, we may want to bound by some small $\epsilon$ the maximum difference between the input and output of the perturbation function. In that case, we will have:
\begin{equation} \label{eq:trans_norm}
    \funcfam{diff} \triangleq \left\{ f: \mathcal{X} \rightarrow \mathcal{X} \mid \forall x_i \in \mathcal{X} \  \|\tmfunc(x_i) - x_i\| \leq \epsilon \right\}
\end{equation}
$\funcfam{diff}$ prevents absolute changes of more than a certain amount. Note that the $\epsilon$ bound may be higher than that of an additive model, since uniform changes are less perceptible. However, this regularization may not be enough to prevent noticeable changes. $\funcfam{diff}$ still includes functions that map similar (but not identical) features very differently. Therefore, a second constraint could be used that forces similar features to be perturbed similarly:
\begin{equation} \label{eq:trans_smooth}
    \funcfam{smooth} \triangleq \left\{ f \mid \forall x_i, x_j \in \mathcal{X} \ \|x_i - x_j\| \leq r \Rightarrow \|(\tmfunc(x_i) - x_i) - (\tmfunc(x_j) - x_j)\| \leq \epsilon_\text{smooth} \right\}
\end{equation}



$\funcfam{smooth}$ requires that similar features are perturbed in the same "direction". For instance, if green pixels in an image are lightened, then yellow-green pixels should be as well.

Depending on the application, these constraints or others may be needed to maintain an imperceptible change. We may want to choose $\funcfam{}$ to be $\funcfam{diff}$, $\funcfam{smooth}$, $\funcfam{diff} \cap \funcfam{smooth}$, or an entirely different family of functions. Once we have chosen an $\funcfam{}$, we can define a corresponding \tmname threat model as
\begin{equation*}
    t_\text{\tm}(\mathcal{S}) \triangleq \left\{ (\tmfunc(x_1), \dots, \tmfunc(x_n)) \mid (x_1, \dots, x_n) \in \mathcal{S}, f \in \funcfam{} \right\}
\end{equation*}


\subsection{Combining Threat Models}
\label{combine_tm}

\citet{jordan_quantifying_2019} argue that combining multiple threat models allows better approximation of the complete set of adversarial perturbations which are imperceptible. Here, we show that combining the additive threat model with a simple \tmname threat model can allow adversarial examples which are not allowable by either model on its own. The following theorem (proved in appendix \ref{theorem_proof}) demonstrates this on images for a combination of an additive threat model which allows changing each pixel by a small, bounded amount and a \tmname threat model which allows darkening or lightening the entire image by up to a larger amount, both of which are arguably imperceptible transformations.

\addtocounter{theorem}{-1}
\begin{theorem}
\label{thm:combined}

Let $\example$ be a grayscale image with $n \geq 2$ pixels, i.e. $\example \in [0, 1]^n = \mathcal{X}^n$. Let $t_\text{add}$ be an additive threat model where the $\ell_\infty$ distance between input and adversarial example is bounded by $\epsilon_1$, i.e. $\|(\delta_1, \dots, \delta_n)\|_\infty \leq \epsilon_1$. Let $t_\text{\tm}$ be a \tmname threat model where $\tmfunc(x) = c \: x$ for some $c \in [1 - \epsilon_2, 1 + \epsilon_2]$ such that $\epsilon_2 > \epsilon_1 > 0$. Let $t_\text{combined} = t_\text{add} \circ t_\text{\tm}$. Then the combined threat model allows adversarial perturbations which are not allowed by either constituent threat model. Formally, if $\mathcal{S} \subseteq \mathcal{X}^n$ contains an image $\example$ that is not dark, that is $\exists\, x_i \text{ s.t. } x_i > \epsilon_1 / \epsilon_2$, then
\begin{equation*}
    t_\text{combined}(\mathcal{S}) \;\supsetneq\;
    t_\text{add}(\mathcal{S}) \cup t_\text{\tm}(\mathcal{S})
    \qquad \textup{or equivalently} \qquad
    \exists \, \advexample \ \text{s.t.} \ 
    \begin{tabular}{l}
        $\advexample \in t_\text{combined}(\mathcal{S})$ \\
        $\advexample \notin t_\text{add}(\mathcal{S}) \cup t_\text{\tm}(\mathcal{S})$
    \end{tabular}
\end{equation*}
\end{theorem}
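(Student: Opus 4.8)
The plan is to split the claimed strict containment into the routine inclusion $t_\text{combined}(\mathcal{S}) \supseteq t_\text{add}(\mathcal{S}) \cup t_\text{\tm}(\mathcal{S})$ and the substantive part, exhibiting a single witness $\advexample \in t_\text{combined}(\mathcal{S})$ that lies outside both constituent sets. For the inclusion, I would observe that each constituent model is a degenerate case of the composition: the scaling family contains the identity $(c=1)$, so $t_\text{\tm}(\mathcal{S}) \supseteq \mathcal{S}$; and $t_\text{add}$ contains the zero shift, so $t_\text{add}(\mathcal{T}) \supseteq \mathcal{T}$ for every $\mathcal{T}$, while $t_\text{add}$ is clearly monotone. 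Applying $t_\text{add}$ to $\mathcal{S} \subseteq t_\text{\tm}(\mathcal{S})$ gives $t_\text{combined}(\mathcal{S}) = t_\text{add}(t_\text{\tm}(\mathcal{S})) \supseteq t_\text{add}(\mathcal{S})$, and taking $\mathcal{T} = t_\text{\tm}(\mathcal{S})$ gives $t_\text{combined}(\mathcal{S}) \supseteq t_\text{\tm}(\mathcal{S})$; together these give the inclusion.

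For the witness I would take the bright coordinate $i$ with $x_i > \epsilon_1/\epsilon_2$ and apply the most extreme brightening followed by a one-coordinate additive bump, i.e.\ $\advexample = (1+\epsilon_2)\,\example + \epsilon_1 e_i$. This lies in $t_\text{combined}(\mathcal{S})$ because $c = 1+\epsilon_2$ is an admissible scaling and the shift $\epsilon_1 e_i$ has $\ell_\infty$ norm $\epsilon_1$. Coordinate $i$ then does all of the excluding. The point $\advexample$ is not in $t_\text{add}(\mathcal{S})$ because the scaling step alone already moves coordinate $i$ by $\epsilon_2 x_i > \epsilon_1$ — this is exactly what the ``not dark'' hypothesis buys — so $\advexample$ cannot be an $\epsilon_1$-bounded shift of $\example$. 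And $\advexample$ is not in $t_\text{\tm}(\mathcal{S})$ because a scaling $x \mapsto c x$ preserves ratios between coordinates, whereas bumping only coordinate $i$ (here is where $n \geq 2$ enters) makes $\advexample$ not proportional to $\example$, and in the degenerate subcase where all other coordinates of $\example$ vanish, matching coordinate $i$ would force $c = (1+\epsilon_2) + \epsilon_1/x_i > 1+\epsilon_2$, which is inadmissible.

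The step I expect to be the main obstacle is upgrading ``$\advexample$ is not reachable from $\example$'' to ``$\advexample \notin t_\text{add}(\mathcal{S}) \cup t_\text{\tm}(\mathcal{S})$'' when $\mathcal{S}$ is an arbitrary set containing $\example$, since a priori some other image in $\mathcal{S}$ could be close or proportional to $\advexample$. The fix is to make the witness overshoot a global feature of $\mathcal{S}$ rather than a feature of $\example$ alone: since every image in $\mathcal{S}$ has pixel values in $[0,1]$, I would instead take as the base image one whose $i$-th coordinate is (arbitrarily close to) $M := \sup_{\example' \in \mathcal{S}} x'_i \geq x_i > 0$, so that $\widetilde{x}_i = (1+\epsilon_2)M + \epsilon_1$ strictly exceeds both $M + \epsilon_1$ (the largest $i$-th coordinate any element of $t_\text{add}(\mathcal{S})$ can have) and $(1+\epsilon_2)M$ (the largest any element of $t_\text{\tm}(\mathcal{S})$ can have); the two inequalities $(1+\epsilon_2)M + \epsilon_1 > M + \epsilon_1$ and $(1+\epsilon_2)M + \epsilon_1 > (1+\epsilon_2)M$ are immediate from $\epsilon_2 M > 0$ and $\epsilon_1 > 0$. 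The assumption $\epsilon_2 > \epsilon_1$ then plays essentially no role in the argument beyond ensuring that the ``not dark'' hypothesis $x_i > \epsilon_1/\epsilon_2$ is satisfiable at all, since pixel values never exceed $1$. Everything else is a short verification.
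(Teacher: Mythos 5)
Your argument is sound in substance but takes a genuinely different route from the paper's. The paper \emph{darkens}: it sets $c = 1 - \epsilon_2$ and adds the bump $\epsilon_1$ to a coordinate \emph{other than} the bright one, giving $\advexample = ((1-\epsilon_2)x_1 + \epsilon_1, (1-\epsilon_2)x_2, \dots, (1-\epsilon_2)x_n)$ with $x_2 > \epsilon_1/\epsilon_2$; exclusion from $t_\text{add}$ then comes from the unbumped bright coordinate (displaced by $\epsilon_2 x_2 > \epsilon_1$), and exclusion from $t_\text{\tm}$ comes from a two-equation system at coordinates $1,2$ in which $x_2 > 0$ pins $c = 1-\epsilon_2$ and coordinate $1$ forces $\epsilon_1 = 0$. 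You instead brighten and bump the bright coordinate itself, and you supply two things the paper omits: an explicit proof of the inclusion $t_\text{combined}(\mathcal{S}) \supseteq t_\text{add}(\mathcal{S}) \cup t_\text{\tm}(\mathcal{S})$ (the paper merely asserts the two formulations are equivalent, which requires this inclusion), and a repair of the step where the paper silently assumes the only candidate preimage in $\mathcal{S}$ is $\example$ itself. Both additions are genuine improvements in rigor, and your single-coordinate ``overshoot'' exclusion is arguably cleaner than the paper's proportionality argument.

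The one concrete worry is range. The threat model is typed as $t: \mathcal{P}(\mathcal{X}^n) \to \mathcal{P}(\mathcal{X}^n)$ with $\mathcal{X} = [0,1]$, and your witness's $i$-th coordinate $(1+\epsilon_2)x'_i + \epsilon_1$ exceeds $1$ whenever $x'_i > (1-\epsilon_1)/(1+\epsilon_2)$ --- i.e., precisely for the bright base images your $\sup$ construction prefers --- so the witness may not be a valid image. The paper's darkening choice sidesteps this: since $\epsilon_1 < \epsilon_2$, one has $(1-\epsilon_2)x_1 + \epsilon_1 \leq 1 - \epsilon_2 + \epsilon_1 < 1$, so its witness always lies in $[0,1]^n$. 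This is also where the hypothesis $\epsilon_2 > \epsilon_1$ earns its keep, contrary to your closing remark that it only ensures satisfiability of the non-darkness condition. To close the gap you should either note that the set-builder definitions of the threat models do not clip to $[0,1]^n$ and argue the theorem in that unclipped reading, or switch to $c = 1-\epsilon_2$; note in the latter case that your $\sup$-based fix for arbitrary $\mathcal{S}$ no longer transfers verbatim (and indeed, with clipping and $\mathcal{S}$ as large as $\mathcal{X}^n$ the strict containment can fail outright, which suggests the intended reading of the theorem is reachability from the distinguished $\example$, i.e., effectively $\mathcal{S} = \{\example\}$).
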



\section{\modelname: \Tmname Adversarial Attacks on Image Colors}
\label{generation}

\begin{figure}[t]
    \centering
    \includegraphics[width=0.7\columnwidth]{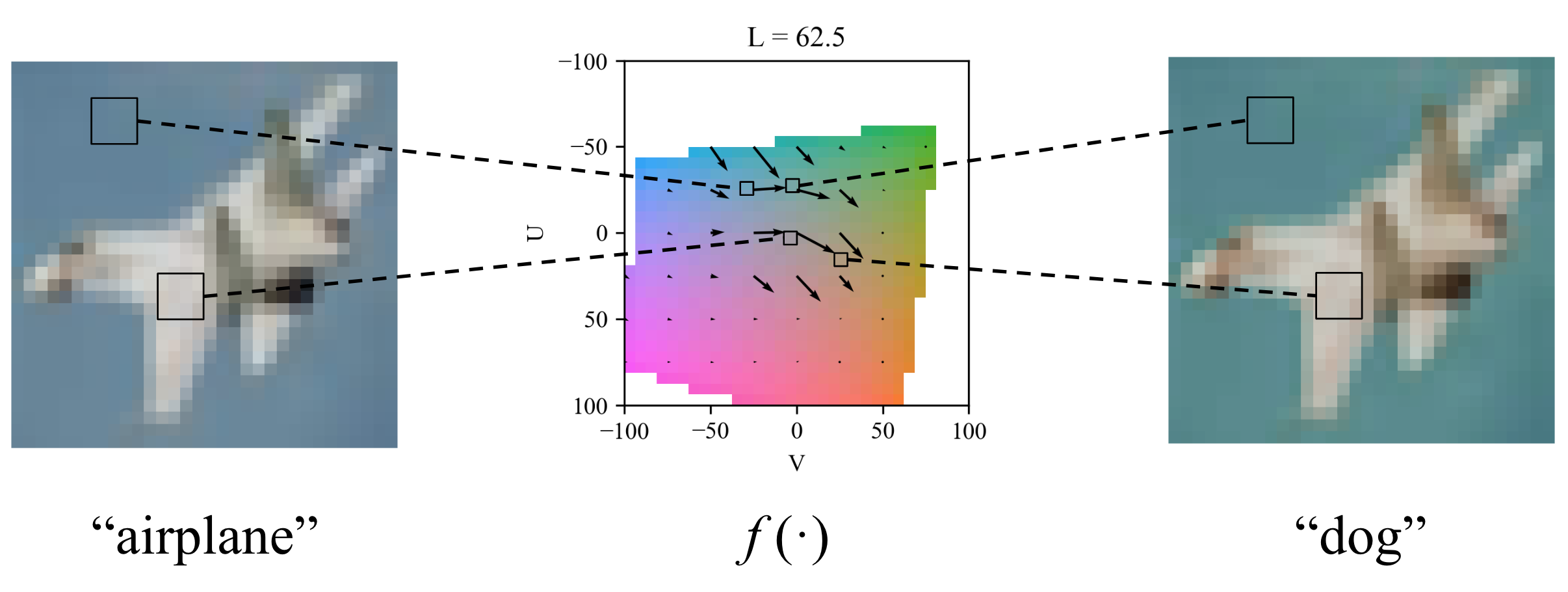}
    \caption{\modelname transforms each pixel in the input image $\example$ (left) by the same function $\tmfunc(\cdot)$ (center) to produce an adversarial example $\advexample$ (right). The perturbation function $\tmfunc$ is shown as a vector field in CIELUV color space.}
    \label{fig:attack}
\end{figure}


In this section, we define \modelname, a novel adversarial attack against image classifiers that leverages a \tmname threat model.
\modelname generates adversarial examples to fool image classifiers by uniformly changing colors of an input image. We treat each pixel $x_i$ in the input image $\example$ as a point in a 3-dimensional color space $\mathcal{C} \subseteq [0, 1]^3$. For instance, $\mathcal{C}$ could be the normal \ac{rgb} color space. In section \ref{color_space}, we discuss our use of alternative color spaces. We leverage a perturbation function $\tmfunc: \mathcal{C} \rightarrow \mathcal{C}$ to produce the adversarial example. Specifically, each pixel in the output $\advexample$ is perturbed from the input $\example$ by applying $\tmfunc(\cdot)$ to the color in that pixel:
\begin{equation*}
    x_i = (c_{i, 1}, c_{i, 2}, c_{i, 3}) \in \mathcal{C} \subseteq [0, 1]^3
    \quad \rightarrow \quad
    \widetilde{x}_i = (\widetilde{c}_{i, 1}, \widetilde{c}_{i, 2}, \widetilde{c}_{i, 3}) = \tmfunc(c_{i, 1}, c_{i, 2}, c_{i, 3})
\end{equation*}
For the purposes of finding an $\tmfunc(\cdot)$ that generates a successful adversarial example, we need a parameterization of the function that allows both flexibility and ease of computation. To accomplish this, we let $\mathcal{G} = {g_1, \dots, g_m} \subseteq [0, 1]^3$ be a discrete grid of points (or point lattice) where $\tmfunc$ is explicitly defined. That is, we define parameters ${\theta_1, \dots, \theta_m}$ and let $f(g_i) = \theta_i$. For points not on the grid, i.e. $x_i \notin \mathcal{G}$, we define $f(x_i)$ using trilinear interpolation. Trilinear interpolation considers the "cube" of the lattice points $g_j$ surrounding the argument $x_i$ and linearly interpolates the explicitly defined $\theta_j$ values at the 8 corners of this cube to calculate $f(x_i)$.


\smallparagraph{Constraints on the perturbation function} We enforce two constraints on $\tmfunc(\cdot)$ to ensure that the crafted adversarial example is indistinguishable from the original image. These constraints are based on slight modifications of $\funcfam{diff}$ and $\funcfam{smooth}$ defined in section \ref{reg_tm}. First, we ensure that no pixel can be perturbed by more than a certain amount along each dimension in color space:
\begin{equation*}
    \funcfam{diff-col} \triangleq
    \left\{
    \tmfunc: \mathcal{C} \rightarrow \mathcal{C}
    \;\middle|\;
    \forall (c_1, c_2, c_3) \in \mathcal{G} \quad
    |c_i - \widetilde{c}_i| < \epsilon_i \quad
    i = 1, 2, 3
    \right\}
\end{equation*}
This particular formulation allows us to set different bounds $(\epsilon_1, \epsilon_2, \epsilon_3)$ on the maximum perturbation along each dimension in color space. We also define a constraint based on $\funcfam{smooth}$, but instead of using a radius parameter $r$ as in (\ref{eq:trans_smooth}) we consider the neighbors $\mathcal{N}(g_j)$ of each lattice point $g_j$ in the grid $\mathcal{G}$:
\begin{equation*}
    \funcfam{smooth-col} \triangleq
    \left\{
    f: \mathcal{C} \rightarrow \mathcal{C} \mid
    \forall g_j \in \mathcal{X}, g_k \in \mathcal{N}(g_j) \quad
    \|(\tmfunc(g_j) - g_j) - (\tmfunc(g_k) - g_k)\|_2 \leq \epsilon_\text{smooth} \right\}
\end{equation*}
In the above, $\|\cdot\|_2$ is the $\ell_2$ (Euclidean) norm in the color space $\mathcal{C}$. We define our set of allowable perturbation functions as $\funcfam{col} = \funcfam{diff-col} \cap \funcfam{smooth-col}$ with parameters $(\epsilon_1, \epsilon_2, \epsilon_3, \epsilon_\text{smooth})$.

\smallparagraph{Optimization} To generate an adversarial example with \modelname, we wish to minimize $\mathcal{L}_\text{adv}(f, x)$ subject to $f \in \funcfam{col}$, where $\mathcal{L}_\text{adv}$ enforces the goal of generating an adversarial example that is misclassified and is defined as the $f_6$ loss from \citet{carlini_towards_2017}, where $\clf(\example)_i$ represents the classifier's $i$th logit:

\begin{equation}
\label{cw_f6}
    \mathcal{L}_\text{adv}(f, x) = \max \left(\max_{i \neq y} (\clf(\advexample)_i - \clf(\advexample)_y), 0\right)
\end{equation}

When solving this constrained minimization problem, it is easy to constrain $f \in \funcfam{diff-col}$ by clipping the perturbation of each color to be within the $\epsilon_i$ bounds. However, it is difficult to enforce $f \in \funcfam{smooth-col}$ directly. Thus, we instead solve a Lagrangian relaxation
where the smoothness constraint is replaced by an additional regularization term:
\begin{equation}
    \label{eq:optim_relax} \tag{$*$}
    \argmin_{f \in \funcfam{diff-col}} \;
    \mathcal{L}_\text{adv}(\tmfunc, \example) + \lambda \, 
    \mathcal{L}_\text{smooth}(\tmfunc)
\end{equation}
\begin{equation*}
    \mathcal{L}_\text{smooth}(f) \triangleq \sum_{g_j \in \mathcal{G}} \sum_{g_k \in \mathcal{N}(g_j)} \|(\tmfunc(g_j) - g_j) - (\tmfunc(g_k) - g_k)\|_2
\end{equation*}
Our $\mathcal{L}_\text{smooth}$ is similar to the loss function used by \citet{xiao_spatially_2018} to ensure a smooth flow field. We use the \ac{pgd} optimization algorithm to solve (\ref{eq:optim_relax}).

\subsection{RGB vs. LUV Color Space}
\label{color_space}

\begin{figure}[t]
    \centering
    \includegraphics[width=1.0\columnwidth]{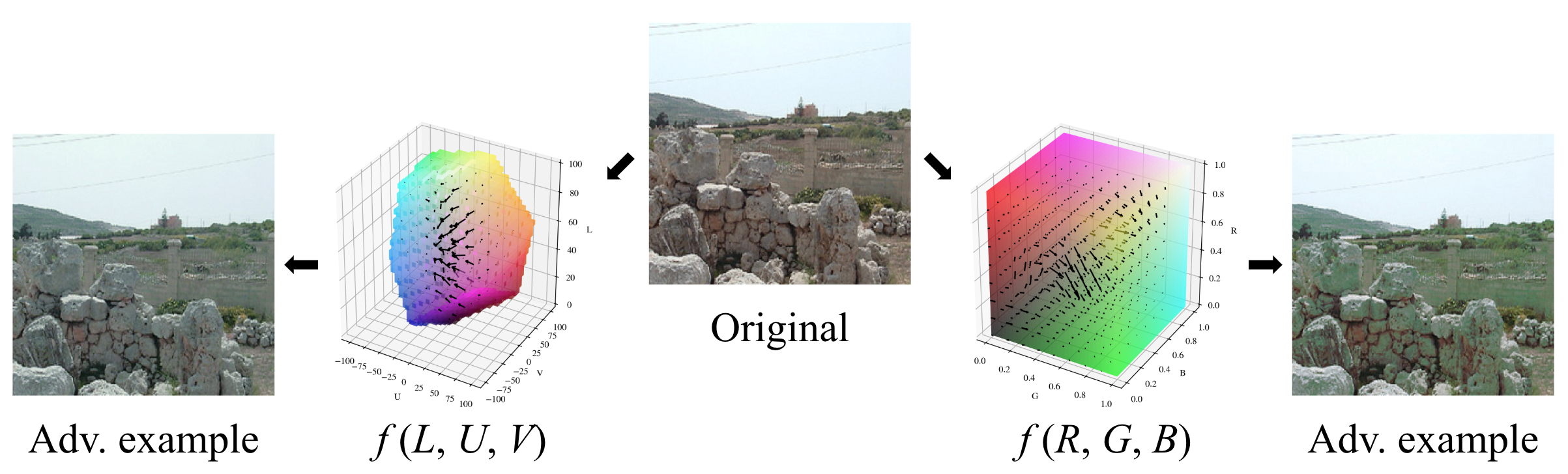}
    \caption{The color space used affects the adversarial example produced by \modelname. The original image at center is attacked by \modelname with the CIELUV color space (left) and RGB color space (right). The RGB color space results in noticeable bright green artifacts in the adversarial example, while the perceptually accurate CIELUV color space produces a more realistic perturbation.}
    \label{fig:color_space}
\end{figure}

Most image classifiers take as input an array of pixels specified in \ac{rgb} color space, but the \ac{rgb} color space has two disadvantages.
The $\ell_p$ distance between points in \ac{rgb} color space is weakly correlated with the perceptual difference between the colors they represent. Also, \ac{rgb} gives no separation between the luma (brightness) and chroma (hue/colorfulness) of a color. 

In contrast, the CIELUV color space separates luma from chroma and places colors such that the Euclidean distance between them is roughly equivalent to the perceptual difference \citep{schwiegerling_field_2004}.
CIELUV presents a color by three components $(L, U, V)$; $L$ is the luma while $U$ and $V$ together define the chroma.
We run experiments using both \ac{rgb} and CIELUV color spaces. CIELUV allows us to regularize the perturbation function $f(\cdot)$ perceptually accurately (see figure \ref{fig:color_space} and appendix \ref{regularization}).
We experimented with the \ac{hsv} and YPbPr color spaces as well; however, neither is perceptually accurate and the \ac{hsv} transformation from RGB is difficult to differentiate (see appendix \ref{other_color_space}).

\section{Experiments}

We evaluate \modelname against defended and undefended neural networks on CIFAR-10 \citep{krizhevsky_learning_2009} and ImageNet \citep{russakovsky_imagenet_2015}. For CIFAR-10 we evaluate the attack against a ResNet-32 \citep{he_deep_2016} and for ImageNet we evaluate against an Inception-v4 network \citep{szegedy_inception-v4_2017}. We also consider all combinations of \modelname with \textit{delta} attacks, which use an $\ell_\infty$ additive threat model with bound $\epsilon = 8/255$, and the \textit{StAdv} attack of \citet{xiao_spatially_2018} that perturbs images spatially through a flow field. See appendix \ref{parameters} for a full discussion of the hyperparameters and computing infrastructure used in our experiments. We release our code at \url{https://github.com/cassidylaidlaw/ReColorAdv}.

\subsection{Adversarial Training}

We first experiment by attacking adversarially trained models with \modelname and other attacks. For each combination of attacks, we adversarially train a ResNet-32 on CIFAR-10 against that particular combination. We attack each of these adversarially trained models with all combinations of attacks. The results of this experiment are shown in the first part of table \ref{tab:accuracy}.

\smallparagraph{Combination attacks are most powerful}
As expected, combinations of attacks are the strongest against defended and undefended classifiers. In particular, the \modelname + StAdv + delta attack often resulted in the lowest classifier accuracy. The accuracy of only 3.6\% after adversarially training against the \modelname + StAdv + delta attack is the lowest we know of.

\smallparagraph{Transferability of robustness across perturbation types}
While adversarial attack "transferability" often refers to the ability of attacks to transfer between models \citep{papernot_transferability_2016}, here we investigate to what degree a model robust to one type of adversarial perturbation is robust to other types of perturbations, similarly to \citet{kang_transfer_2019}. To some degree, the perturbations investigated are orthogonal; that is, a model trained against a particular type of perturbation is less effective against others. StAdv is especially separate from the other two attacks; models trained against StAdv attacks are still very vulnerable to \modelname and delta attacks. However, the \modelname and delta attacks allow more transferable robustness between each other. These results are likely due to the fact that both the delta and \modelname attacks operate on a per-pixel basis, whereas the StAdv attack allows spatial movement of features across pixels.

\smallparagraph{Effect of color space}
The \modelname attack using CIELUV color space is stronger than that using \ac{rgb} color space. In addition, the CIELUV color space produces less perceptible perturbations (see figure \ref{fig:color_space}). This highlights the need for using perceptually accurate models of color when designing and defending against adversarial examples.

\begin{table}[t]
    \centering
    \caption{Accuracy of adversarially trained models against various combinations of attacks on CIFAR-10. Columns correspond to attacks and rows correspond to models trained against a particular attack. C(-RGB) is \modelname using CIELUV (\ac{rgb}) color space, D is delta attack, and S is StAdv attack. TRADES is the method of \citet{zhang_theoretically_2019}. For classifiers marked (B\&W), the images are converted to black-and-white before classification.}
    \begin{tabular}{l|rrrrrrrrr}
        \toprule
        & \multicolumn{9}{c}{\bf Attack} \\
        \bf Defense $\downarrow$ & \bf None & \bf C-RGB & \bf C & \bf D & \bf S & \bf C+S & \bf C+D & \bf S+D & \bf C+S+D \\
        \midrule
        \bf Undefended & 92.2 & 5.9 & 3.0 & \textbf{0.0} & 0.9 & 0.8 & \textbf{0.0} & \textbf{0.0} & \textbf{0.0} \\
        \bf C & 88.7 & 43.5 & 45.8 & 5.7 & 3.6 & 3.4 & 0.9 & \textbf{0.2} & \textbf{0.2} \\
        \bf D & 84.8 & 74.9 & 50.6 & 30.6 & 16.0 & 11.7 & 8.9 & 2.7 & \textbf{2.2} \\
        \bf S & 82.7 & 16.9 & 8.0 & 0.5 & 26.2 & 4.8 & \textbf{0.0} & 0.1 & \textbf{0.0} \\
        \bf C+S & 89.5 & 31.7 & 23.0 & 0.7 & 10.9 & 8.7 & 0.5 & 0.6 & \textbf{0.4} \\
        \bf C+D & 88.5 & 36.3 & 19.5 & 7.5 & \textbf{2.7} & 2.8 & 5.2 & 4.1 & 4.6 \\
        \bf S+D & 82.1 & 66.9 & 42.7 & 35.4 & 21.9 & 13.4 & 12.2 & 7.6 & \textbf{4.1} \\
        \bf C+S+D & 88.9 & 30.6 & 17.2 & 7.3 & 3.5 & \textbf{3.3} & 5.5 & 3.7 & 3.6 \\
        \midrule
        \bf TRADES & 84.4 & 81.3 & 59.2 & 53.6 & 26.6 & 17.5 & 22.0 & 8.6 & \textbf{5.7} \\
        \midrule
        \bf Undefended (B\&W) & 88.3 & 5.3 & 4.1 & \textbf{0.0} & 0.9 & 0.6 & \textbf{0.0} & \textbf{0.0} & \textbf{0.0} \\
        \bf C (B\&W) & 85.8 & 40.8 & 38.9 & 4.2 & 2.5 & 2.5 & 0.5 & \textbf{0.1} & 0.2 \\
        \bottomrule
    \end{tabular}
    \label{tab:accuracy}
\end{table}

\begin{figure}[t]
    \centering
    \begin{minipage}[b]{0.48\columnwidth}
        \begin{tabularx}{\columnwidth}{YYYYY}
            Orig. & C & D & C+D & C+S+D
        \end{tabularx}
        \includegraphics[width=\columnwidth]{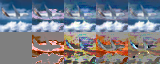}
    \end{minipage}
    \quad
    \begin{minipage}[b]{0.48\columnwidth}
        \begin{tabularx}{\columnwidth}{YYYYY}
            Orig. & C & D & C+D & C+S+D
        \end{tabularx}
        \includegraphics[width=\columnwidth]{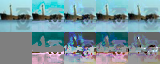}
    \end{minipage}
    \caption{Adversarial examples generated with combinations of attacks against a CIFAR-10 WideResNet \citep{zagoruyko_wide_2016} trained using TRADES; the difference from the original is shown below each example. Combinations of attacks tend to produce less perceptible changes than the attacks do separately.}
    \label{fig:examples_cifar}
\end{figure}

\subsection{Other Defenses}

\smallparagraph{TRADES}
TRADES is a training algorithm for deep neural networks that aims to improve robustness to adversarial examples by optimizing a surrogate loss \citep{zhang_theoretically_2019}. The algorithm is designed around an additive threat model, but we evaluate a TRADES-trained classifier on all combinations of attacks (see the second part of table \ref{tab:accuracy}). This is the best defense method against almost all attacks, despite having been trained based on just an additive threat model. However, the combined \modelname + StAdv + delta attack still reduces the accuracy of the classifier to just 5.7\%.

\smallparagraph{Grayscale conversion}
Since \modelname attacks input images by changing their colors, one might attempt to mitigate the attack by converting all images to grayscale before classification. This could reduce the potential perturbations available to \modelname since altering the chroma of a color would not affect the grayscale image; only changes in luma would. We train models on CIFAR-10 that convert all images to grayscale as a preprocessing step both with and without adversarial training against \modelname. The results of this experiment (see the third part of table \ref{tab:accuracy}) show that conversion to grayscale is not a viable defense against \modelname. In fact, the natural accuracy and robustness against almost all attacks decreases when applying grayscale conversion.

\subsection{Perceptual Distance}
\label{lpips}

We quantify the perceptual distortion caused by \modelname attacks using the \ac{lpips} metric, a distance measure between images based on deep network activations which has been shown to correlate with human perception \citep{zhang_unreasonable_2018}. We combine \modelname and delta attacks and vary the bound of each attack (see figure \ref{fig:lpips}). We find that the attacks can be combined without much increase, or with even sometimes a \textit{decrease}, in perceptual difference. As \citet{jordan_quantifying_2019} find for combinations of StAdv and delta attacks, the lowest perceptual difference at a particular attack strength is achieved by a combination of \modelname and delta attacks.

\begin{figure}[t]
    \centering
    \includegraphics[width=0.9\columnwidth]{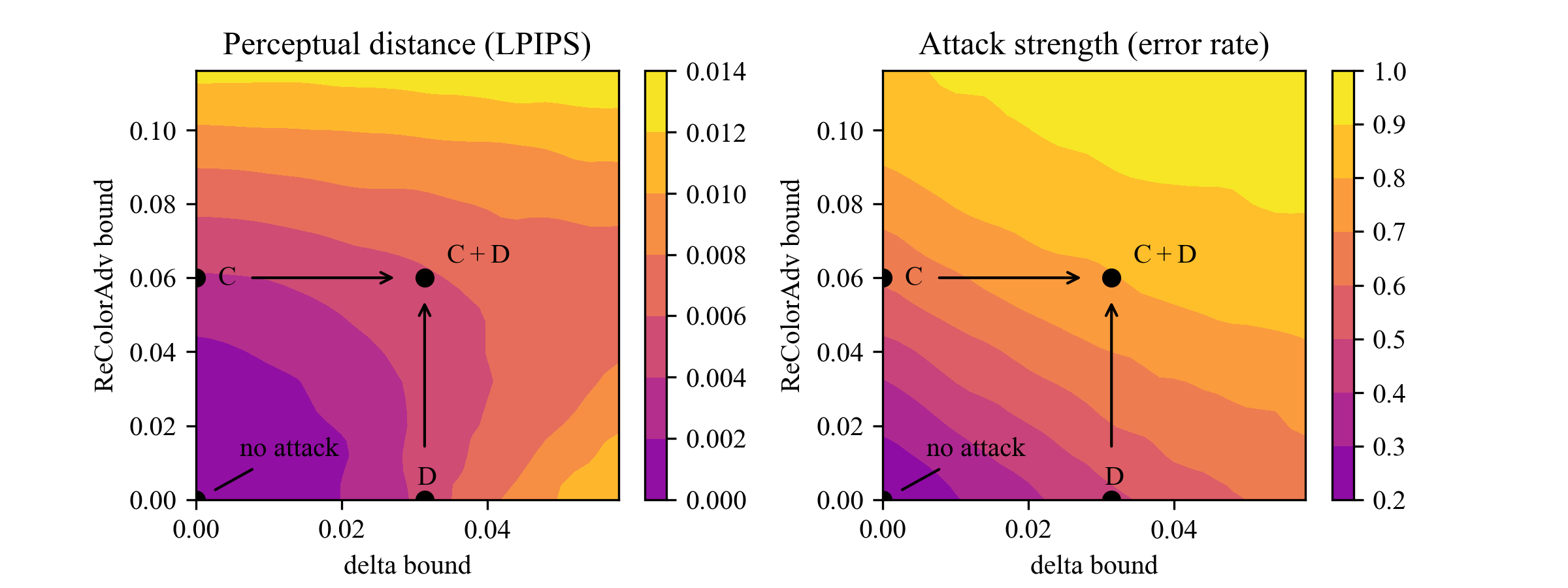}
    \caption{The perceptual distortion (LPIPS) and strength (error rate) of combinations of \modelname and delta attacks with various bounds. The annotated points mark the bounds used in other experiments: C is \modelname, D is a delta attack, and C+D is their combination. Combining the attacks does not increase perceptable change by much (left), but it greatly increases attack strength (right).}
    \label{fig:lpips}
\end{figure}

\section{Conclusion}

We have presented \textit{\tmname threat models} for adversarial examples, which allow large, uniform changes to an input. They can be combined with additive threat models to provably increase the potential perturbations allowed in an adversarial attack. In practice, the \modelname attack, which leverages a \tmname threat model against image pixel colors, is a strong adversarial attack on image classifiers. It can also be combined with other attacks to produce yet more powerful attacks---even after adversarial training---without a significant increase in perceptual distortion. Besides images, \tmname adversarial attacks could be designed for audio, text, and other domains. It will be crucial to develop defense methods against these attacks, which encompass a more complete threat model of which potential adversarial examples are imperceptible to humans.

\section*{Acknowledgements}

This work was supported in part by NSF award CDS\&E:1854532 and award HR00111990077.

\bibliographystyle{plainnat}
\bibliography{paper}

\pagebreak
\appendix

\section{Combining the Additive and \Tmname Threat Models}

Here we provide a proof of Theorem \ref{thm:combined}.

\paragraph{Threat model} 

Let $\example$ be a grayscale image with $n \geq 2$ pixels, i.e. $\example \in [0, 1]^n = \mathcal{X}^n$. Let $t_\text{add}$ be an additive threat model where the $\ell_\infty$ distance between input and adversarial example is bounded by $\epsilon_1$, i.e. $\|(\delta_1, \dots, \delta_n)\|_\infty \leq \epsilon_1$. Let $t_\text{\tm}$ be a \tmname threat model where $\tmfunc(x) = c \: x$ for some $c \in [1 - \epsilon_2, 1 + \epsilon_2]$ and let $\epsilon_2 > \epsilon_1 > 0$. The additive threat model allows individually changing each pixel's value by up to $\epsilon_1$; the \tmname threat model allows darkening or lightening the entire image by up to a proportion of $\epsilon_2$. Both of these are arguably imperceptible perturbations for small enough $\epsilon_1$ and $\epsilon_2$. We also consider $t_\text{combined} = t_\text{add} \circ t_\text{\tm}$:

\begin{equation}
\begin{aligned}
    \label{eq:combined_tm}
    t_\text{combined}(\mathcal{S}) \triangleq \left\{(c \: x_1 + \delta_1, \dots, c \: x_n + \delta_n)
    \;\middle|\;
    \begin{tabular}{@{}l@{}}
        $(x_1, \dots, x_n) \in \mathcal{S}$ \\
        $|\delta_i| \leq \epsilon_1$ \\
        $c \in [1 - \epsilon_2, 1 + \epsilon_2]$
     \end{tabular}\right\}
\end{aligned}
\end{equation}

This combined threat model allows darkening or lightening the image, followed by changing each pixel value individually by a small amount.

\addtocounter{theorem}{-1}
\begin{theorem}[restated]

Let $\mathcal{S} \in \mathcal{P}(\mathcal{X}^n)$ be a set of inputs such that $\mathcal{S}$ contains an image that is not too dark; that is, $\exists\, \example \in \mathcal{S}$ for which $\exists\, x_i \text{ s.t. } x_i > \epsilon_1 / \epsilon_2$. Then

\begin{equation*}
    t_\text{combined}(\mathcal{S}) \;\supsetneq\;
    t_\text{add}(\mathcal{S}) \cup t_\text{\tm}(\mathcal{S})
    \qquad \textup{or equivalently} \qquad
    \exists \, \advexample \ \text{s.t.} \ 
    \begin{tabular}{ll}
        $\advexample \in t_\text{combined}(\mathcal{S})$ \\
        $\advexample \notin t_\text{add}(\mathcal{S}) \cup t_\text{\tm}(\mathcal{S})$
    \end{tabular}
\end{equation*}
\end{theorem}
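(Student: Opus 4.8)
The plan is to pin down a single perturbed image $\advexample$ that sits in $t_\text{combined}(\mathcal{S})$ but in neither $t_\text{add}(\mathcal{S})$ nor $t_\text{\tm}(\mathcal{S})$. The containments $t_\text{combined}(\mathcal{S}) \supseteq t_\text{add}(\mathcal{S})$ (taking the scale factor $c = 1$) and $t_\text{combined}(\mathcal{S}) \supseteq t_\text{\tm}(\mathcal{S})$ (taking every $\delta_i = 0$) are immediate from the definition of $t_\text{combined}$, so exhibiting one such $\advexample$ already yields the strict inclusion.

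First I would identify the largest pixel value occurring anywhere in $\mathcal{S}$, namely $M \triangleq \sup\{\, x'_j \mid (x'_1,\dots,x'_n) \in \mathcal{S},\ 1 \le j \le n \,\}$; the non-darkness hypothesis then gives $M > \epsilon_1/\epsilon_2$. Next I would pick an image $\example^* = (x^*_1,\dots,x^*_n) \in \mathcal{S}$ and a coordinate $i^*$ whose entry nearly attains this supremum, say $x^*_{i^*} > M - \epsilon_1/(1+\epsilon_2)$, which is possible since $\epsilon_1/(1+\epsilon_2) > 0$ (and trivial with $x^*_{i^*} = M$ if the supremum is attained). Finally I would take $\advexample$ to be $\example^*$ scaled up by the maximal factor with the whole additive budget spent on coordinate $i^*$:
\begin{equation*}
    \widetilde{x}_{i^*} \;=\; (1+\epsilon_2)\,x^*_{i^*} + \epsilon_1, \qquad\qquad \widetilde{x}_j \;=\; (1+\epsilon_2)\,x^*_j \quad \text{for } j \neq i^* .
\end{equation*}
This has the form $c\,\example^* + \delta$ with $c = 1+\epsilon_2 \in [1-\epsilon_2,\,1+\epsilon_2]$ and $\|\delta\|_\infty = \epsilon_1$, so $\advexample \in t_\text{combined}(\mathcal{S})$.

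The two non-memberships should then fall out by inspecting only coordinate $i^*$. Every element of $t_\text{add}(\mathcal{S})$ has $i^*$-th coordinate at most $M + \epsilon_1$ (a pixel value $\le M$ plus an additive term of size $\le \epsilon_1$), and every element of $t_\text{\tm}(\mathcal{S})$ has $i^*$-th coordinate at most $(1+\epsilon_2)M$ (a nonnegative pixel value $\le M$ times a scale factor $\le 1+\epsilon_2$). On the other hand the choice of $x^*_{i^*}$ forces $\widetilde{x}_{i^*} = (1+\epsilon_2)x^*_{i^*} + \epsilon_1 > (1+\epsilon_2)M$, and $(1+\epsilon_2)M > M + \epsilon_1$ is just the hypothesis $M > \epsilon_1/\epsilon_2$ rearranged; hence $\widetilde{x}_{i^*}$ strictly exceeds both of those ceilings, so $\advexample$ lies outside $t_\text{add}(\mathcal{S}) \cup t_\text{\tm}(\mathcal{S})$ and the proof is complete.

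The step I expect to be the real obstacle is the universal quantifier over $\mathcal{S}$ rather than over a single image. The naive move is to scale up whichever pixel is guaranteed to exceed $\epsilon_1/\epsilon_2$ and argue relative to that one image, but such a pixel may be dim compared with other images in $\mathcal{S}$, whereas the ceilings on coordinate $i^*$ of $t_\text{add}(\mathcal{S})$ and $t_\text{\tm}(\mathcal{S})$ are controlled by the \emph{global} supremum $M$. Anchoring on a near-maximal pixel is what makes the slack $(1+\epsilon_2)M - (M+\epsilon_1)$ nonnegative, and this is precisely the place where the $\epsilon_1/\epsilon_2$ threshold in the hypothesis gets used: it certifies that lifting the brightest pixel by the functional scaling overshoots everything the additive model can reach.
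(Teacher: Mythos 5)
Your proposal is correct under the paper's literal set-builder definitions of the threat models, and it takes a genuinely different route. The paper's proof picks the one image $\example$ with a bright pixel, \emph{darkens} it ($c = 1-\epsilon_2$) while spending the additive budget on pixel 1, and rules out $t_\text{add}$ and $t_\text{\tm}$ by solving for the would-be $\delta_2$ or $c$ and deriving a contradiction from a two-coordinate system of equations; crucially, that argument implicitly assumes the competing perturbation acts on the \emph{same} source image $\example$, i.e., it really treats $\mathcal{S}$ as a singleton. You correctly identify this as the delicate point and repair it: by anchoring on the global supremum $M$ and \emph{brightening} ($c = 1+\epsilon_2$) a near-maximal pixel, you get a single-coordinate ceiling argument ($\widetilde{x}_{i^*} > (1+\epsilon_2)M > M + \epsilon_1$) that excludes perturbations of \emph{every} image in $\mathcal{S}$ at once, and that does not even use $n \geq 2$. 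The one trade-off to flag: your witness can leave the image domain, since $(1+\epsilon_2)x^*_{i^*} + \epsilon_1 > 1$ whenever $M$ is close to $1$, whereas the paper's darkened witness always stays in $\mathcal{X}^n = [0,1]^n$ (given $\epsilon_2 \leq 1$). The displayed definitions of $t_\text{add}$, $t_\text{\tm}$, and $t_\text{combined}$ impose no clipping, so your argument is valid as the sets are written; but under the stricter reading that a threat model maps into $\mathcal{P}(\mathcal{X}^n)$, your construction would need an adjustment (e.g., reverting to the darkening direction, at the cost of reintroducing the multi-image complication), while the paper's would need the singleton assumption made explicit. In short: two correct witnesses with complementary blind spots; yours buys robustness to general $\mathcal{S}$, the paper's buys range validity.
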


\begin{proof}
\label{theorem_proof}

The above two statements are equivalent, so we focus on the formulation on the right. We calculate $\advexample$ and show that it satisfies the given criteria. Let $\example \in \mathcal{S}$ such that $\exists\, x_i \text{ s.t. } x_i > \epsilon_1 / \epsilon_2$. Without loss of generality, assume that in particular $x_2 > \epsilon_1 / \epsilon_2$. Then let

\begin{equation*}
    \advexample = ((1 - \epsilon_2) \: x_1 + \epsilon_1, (1 - \epsilon_2) \: x_2, \dots, (1 - \epsilon_2) \: x_n)
\end{equation*}

First, we show that $\advexample \in t_\text{combined}(\mathcal{S})$. Using the definition of $t_\text{combined}$ in (\ref{eq:combined_tm}), we set $c = 1 - \epsilon_2$, $\delta_1 = \epsilon_1$, and $\delta_2 = \dots = \delta_n = 0$, which generates $\advexample$. These values clearly satisfy the constraints in (\ref{eq:combined_tm}).

Second, we prove that $\advexample \notin t_\text{add}(\mathcal{S})$ by contradiction. Say that $\advexample \in t_\text{add}(\mathcal{S})$. Then $\exists\, \delta_1, \delta_2, \dots, \delta_n$ such that $\widetilde{x}_i = x_i + \delta_i$ and $\| \delta_i \| \leq \epsilon_1$. Consider $\delta_2$, which must satisfy $\widetilde{x}_2 = (1 - \epsilon_2) \: x_2 = x_2 + \delta_2$, or alternatively $\delta_2 = x_2 - (1 - \epsilon_2) \:  x_2 = \epsilon_2 \: x_2$. However, $x_2 > \epsilon_1 / \epsilon_2$ implies that $\delta_2 > \epsilon_1$, which is a contradiction since the constraints on $t_\text{add}$ specify that $| \delta_2 | \leq \epsilon_1$. Thus, $\advexample \notin t_\text{add}(\mathcal{S})$.

Third, we prove that $\advexample \notin t_\text{\tm}(\mathcal{S})$, again by contradiction. Say that $\advexample \in t_\text{\tm}(\mathcal{S})$. Then $\exists\, c \in [1 - \epsilon_2, 1 + \epsilon_2]$ such that $\widetilde{x}_i = c \: x_i$ for all $i$. Considering $i = 1, 2$, we have the following system of equations:

\begin{align*}
    \widetilde{x}_1 = c x_1 & = (1 - \epsilon_2) \: x_1 + \epsilon_1 \\
    \widetilde{x}_2 = c x_2 & = (1 - \epsilon_2) \: x_2
\end{align*}

From the second equation, we have $c = 1 - \epsilon_2$. However, using this in the first equation gives $(1 - \epsilon_2) \: x_1 = (1 - \epsilon_2) \: x_1 + \epsilon_1$, which implies $0 = \epsilon_1$. This is a contradiction since $\epsilon_1 > 0$, showing that $\advexample \notin t_\text{\tm}(\mathcal{S})$.
\end{proof}

\section{Experimental Setup}
\label{parameters}

We implement \modelname using the \texttt{mister\_ed} library \citep{jordan_quantifying_2019} and PyTorch \citep{paszke_automatic_2017}. Adversarial examples are generated by \ac{pgd} using the Adam optimizer \citep{kingma_adam:_2014} with learning rate 0.001. During adversarial training 100 iterations of Adam are used and during evaluation 300 iterations are used; see appendix \ref{attack_iter} for more information on the effect of the number of PGD iterations. After all iterations have completed, we choose the result of the iteration with the lowest loss as the adversarial example.

When combining attacks, we apply multiple attacks sequentially to the input example and optimize over the parameters of all attacks simultaneously, similarly to \citet{jordan_quantifying_2019}.

In all adversarial training experiments on CIFAR-10, we begin with a trained ResNet32 \citep{idelbayev_proper_2018} and then train it further on batches which are half original training data and half adversarial examples. We adversarially train with a batch size of 500 for 50 epochs. We preprocess images after adversarial perturbation, but before classification, by standardizing them based on the mean and standard deviation of each channel for all images in the dataset. The CIFAR-10 dataset can be obtained from \url{https://www.cs.toronto.edu/~kriz/cifar.html}.

In CIELUV color space (see section \ref{color_space}), we define
\begin{equation}
    (c_1, c_2, c_3) = \left(\frac{L}{100}, \frac{U + 100}{200}, \frac{V + 100}{200}\right)
\end{equation}
so that $(c_1, c_2, c_3) \in [0, 1]^3$.

For the experiments described in section \ref{lpips}, we use LPIPS v0.1 with AlexNet. 

\subsection{Regularization Parameters}
\label{regularization}

The objective function and constraints described in section \ref{generation} include a number of constants that can be used to regularize the outputs of the \modelname attack. Changing these constants alters the strength of the attack and the perceptual similarity of a generated adversarial example to the input.

First, $\epsilon_1$, $\epsilon_2$, and $\epsilon_3$ control the maximum amount by which a color in $\example$ can be changed to produce $\advexample$. For \ac{rgb} color space, we set $\epsilon_1 = \epsilon_2 = \epsilon_3 = 0.1$; that is, each channel of a color can change by up to $\sim 25/255$. This is greater than the usual $\epsilon = 8/255$ allowed for adversarial examples, but we find that the uniform perturbation used by the \tmname threat model allows each pixel to change by a greater amount while remaining almost indistinguishable. For the CIELUV color space, we let $\epsilon_1 = \epsilon_2 = \epsilon_3 = 0.06$. This corresponds to a maximum change of 6 in $L$ and a maximum change of 3 in $U$ and $V$, since we find that changes in luma are usually less noticeable than changes in chroma. The $\epsilon_i$ values for \ac{rgb} and CIELUV color spaces result in similar total amounts of perturbation, but the CIELUV color space allows the perturbation to be greater in areas where it is less noticeable.

Second, we can control the resolution of the grid $\mathcal{G}$ over which the perturbation function $\tmfunc(\cdot)$ is parameterized. Let $R_1 \times R_2 \times R_3$ be the resolution of $\mathcal{G}$. Lowering the resolution in a particular dimension acts as a regularizer because it allows less variation in how colors are transformed along that dimension. For \ac{rgb} color space, we use $R_1 = R_2 = R_3 = 25$. However, for CIELUV color space, we use $R_1 = 16$ and $R_2 = R_3 = 32$. With a high $R_1$ value, we find that the attack sometimes recolors different values of a particular hue very differently. For instance, the attack might make the light parts of a white car green and the dark parts purple. Lowering $R_1$ forces the attack to alter these colors more similarly.

Finally, $\lambda$ controls the importance of the smoothness optimization term $\mathcal{L}_\text{smooth}$. We always set $\lambda = 0.05$.

\section{Other Color Spaces}
\label{other_color_space}

Besides the \ac{rgb} and CIELUV color spaces, we also experimented with the \ac{hsv} and {YPbPr} color spaces. {HSV} presents difficulties when performing PGD because the derivative of the transformation from {RGB} is highly discontinuous; thus we use an approximation HSV$'$ which maps colors into a hexagonal pyramid instead of the standard HSV cone. A disadvantage of both {HSV} and {YPbPr} is that they were originally designed for transmitting video signals rather than as an accurate representation of how humans view colors.

Below, we present ReColorAdv using four color spaces; see C-\{LUV, RGB, YPbPr, HSV$'$\}. We also experiment with some additional variations:

\begin{itemize}
    \item We apply ReColorAdv separately to each channel, i.e. $(R, G, B) \rightarrow (f_1(R), f_2(G), f_3(B))$; see C-\{LUV, RGB\}-Sep-Channels.
    \item The human eye is more sensitive to variations in some colors (e.g. green) than others (e.g. blue). We experiment with separate bounds for each RGB channel based on this sensitivity ($\epsilon_R = 0.1, \epsilon_G = 0.05, \epsilon_B = 0.15$); see C-RGB-Sep-Bounds. Note that we already applied separate bounds in CIELUV color space, as detailed in appendix \ref{regularization}.
\end{itemize}

For each of these variations, the accuracy of an undefended model under the attack is shown.

\begin{center}
    \begin{tabularx}{\columnwidth}{@{}s@{\hskip 0.1in}b@{\hskip 0.1in}b@{\hskip 0.1in}b@{\hskip 0.1in}b@{\hskip 0.1in}b@{\hskip 0.1in}b@{\hskip 0.1in}b@{\hskip -0.2in}}
        \bf Original & \bf C-LUV & \bf C-LUV-Sep-Channels & \bf C-RGB & \bf C-RGB-Sep-Channels & \bf C-RGB-Sep-Bounds & \bf C-YPbPr & \bf C-HSV$'$ \\
        92.3\% & 4.4\% & 8.4\% & 8.2\% & 9.3\% & 8.4\% & 2.6\% & 2.1\%
    \end{tabularx}
\end{center}
\vspace*{-0.7\baselineskip}
\includegraphics[width=\columnwidth]{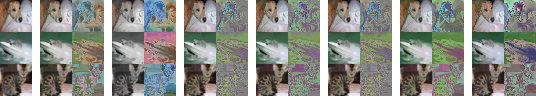}

\section{PGD Iterations and Attack Strength}
\label{attack_iter}

In our main results on CIFAR-10, we use 300 iterations of PGD to generate adversarial examples. Here, we also report results with using only 100 iterations. The combined attacks in particular are weaker when using fewer iterations, perhaps because they have more parameters that need to be optimized. This presents a challenge for efficient adversarial training since many inner loop iterations are needed to produce a strong attack.

\begin{center}
    \begin{tabular}{l|rrrrrrrrr}
        \toprule
        & \multicolumn{9}{c}{\textbf{Attack} (100 PGD iterations)} \\
        \bf Defense $\downarrow$ & \bf None & \bf C-RGB & \bf C & \bf D & \bf S & \bf C+S & \bf C+D & \bf S+D & \bf C+S+D \\
        \midrule
        \bf Undefended & 92.5 & 8.2 & 4.4 & \textbf{0.0} & 2.0 & 1.6 & \textbf{0.0} & \textbf{0.0} & \textbf{0.0} \\
        \bf C & 88.7 & 50.1 & 50.2 & 6.8 & 4.6 & 4.6 & 1.6 & \textbf{0.6} & 0.8 \\
        \bf D & 84.8 & 77.0 & 55.4 & 32.2 & 17.8 & 15.9 & 12.7 & 4.7 & \textbf{3.7} \\
        \bf S & 82.2 & 21.3 & 12.0 & 0.8 & 30.5 & 7.8 & \textbf{0.1} & 0.2 & \textbf{0.1} \\
        \bf C+S & 89.5 & 39.6 & 30.9 & 4.0 & 19.9 & 15.4 & \textbf{2.4} & 4.1 & 3.4 \\
        \bf C+D & 88.5 & 43.4 & 25.4 & 21.4 & 4.1 & \textbf{3.9} & 14.9 & 7.9 & 7.9 \\
        \bf S+D & 82.0 & 68.9 & 47.7 & 36.5 & 26.3 & 17.5 & 17.4 & 9.9 & \textbf{6.7} \\
        \bf C+S+D & 88.9 & 34.4 & 22.6 & 18.0 & 6.2 & \textbf{5.9} & 15.4 & 10.1 & 8.8 \\
        \midrule
        \bf TRADES & 84.2 & 81.6 & 64.8 & 53.8 & 30.9 & 23.2 & 29.0 & 11.6 & \textbf{8.1} \\
        \midrule
        \bf Undefended (B\&W) & 88.0 & 7.2 & 6.0 & \textbf{0.0} & 1.6 & 1.6 & \textbf{0.0} & \textbf{0.0} & \textbf{0.0} \\
        \bf C (B\&W) & 85.5 & 46.3 & 42.0 & 4.8 & 3.7 & 3.9 & 0.9 & \textbf{0.4} & 0.7 \\
        \bottomrule
    \end{tabular}
\end{center}

\section{Non-Additive Threat Models}
\label{other_tm}

Here, we discuss some other non-additive adversarial threat models that have been explored in the literature and how our work differs from them.

\paragraph{Spatial Threat Models} Some recent work has focused on \textit{spatial threat models}, which allow for slight perturbations of the locations of features in an input rather than perturbations of the features themselves. \citet{xiao_spatially_2018} propose StAdv, which optimizes the parameters of a smooth flow field that moves each pixel of an input image by a small, bounded distance to generate an example that fools the classifier. \citet{wong_wasserstein_2019} bound the Wasserstein distance between the original input and the adversarial example. \citet{engstrom_rotation_2017} apply an small rotation and translation to an input image to generate a misclassification.

\paragraph{Color Threat Models} 
Some previous work has explored uniform modification of an image's colors.
\citet{bhattad_big_2019} describe the cAdv attack, which converts an input image to black and white and then controls a colorization network to produce a differently-colored adversarial example.
\citet{zhang_limitations_2019} apply a single affine function to all pixels in an input image before performing PGD.
\citet{hosseini_semantic_2018} propose "Semantic Adversarial Examples," which allow modifications of the input image's hue and saturation. \citet{hosseini_limitation_2017} also explore inverting images to cause misclassification. These latter three papers can be considered as special examples of \tmname threat models. In the first, each channel undergoes an affine transformation, i.e. $\tmfunc(x_i) = \alpha x + \beta$. In the second, each pixel's hue and saturation is shifted by the same amount; that is, each pixel is transformed by the function $\tmfunc(h, s, v) = (h + \delta_h, s + \delta_s, v)$. In the third, each pixel is inverted, i.e. each pixel channel is transformed by the function $\tmfunc(x_i) = 1 - x_i$. However, the authors do not propose a general framework for these types of attacks, as we do. Furthermore, the adversarial examples generated by these attacks are often not realistic and/or not imperceptible. For example, their crafted adversarial examples include green skies, purple fields of grass, and inverted street signs---unlike our proposed \modelname attack, which results in imperceptible changes.

\paragraph{Other Threat Models} A few papers have focused on threat models that are neither additive, spatial, or color-based. \citet{zeng_adversarial_2019} perturb the properties of a 3D renderer (illumination, surface colors, materials, object placement) to render an image of an object which is unrecognizable to a classifier or other machine learning algorithm.
\citet{song_constructing_2018} uses a generative model to craft adversarial examples directly without perturbing other images. In contrast, we aim to augment the space of adversarial perturbations of images in the train or test sets.

\section{Additional Images}

\begin{figure}[H]
    \centering
    \includegraphics[width=\columnwidth]{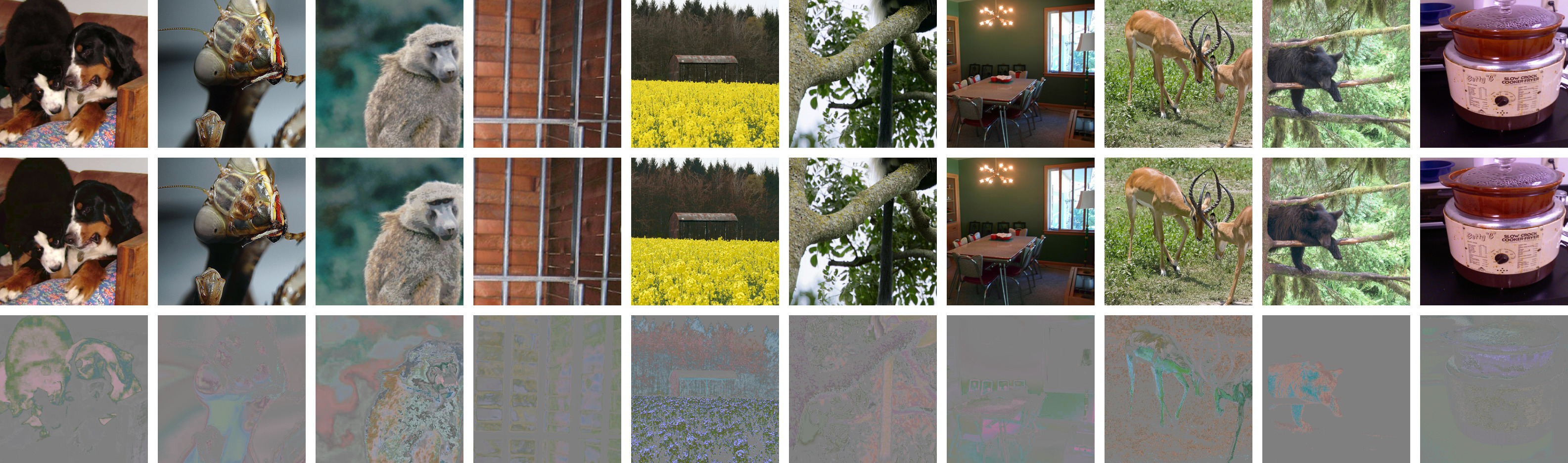}
    \caption{More adversarial examples like those in figure \ref{fig:examples_imagenet}, generated by \modelname against an Inception-v4 classifier on ImageNet. Top row: original images; middle row: adversarial examples; bottom row: magnified difference.
    }
\end{figure}

\begin{figure}[H]
    \centering
    \begin{minipage}[b]{0.08\columnwidth}
        \begin{flushright}
        Original \\[16.5pt]
        C \\[16.5pt]
        D \\[16.5pt]
        C+D \\[16.5pt]
        C+S+D\vspace{12pt}
        \end{flushright}
    \end{minipage}
    \quad
    \includegraphics[width=0.14\columnwidth]{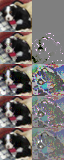}
    \includegraphics[width=0.14\columnwidth]{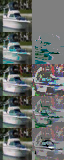}
    \includegraphics[width=0.14\columnwidth]{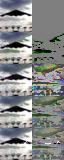}
    \includegraphics[width=0.14\columnwidth]{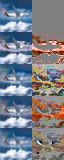}
    \includegraphics[width=0.14\columnwidth]{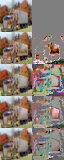}
    \includegraphics[width=0.14\columnwidth]{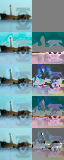}
    \caption{More adversarial examples like those in figure \ref{fig:examples_cifar}, generated with combinations of attacks against a CIFAR-10 WideResNet trained using TRADES. C is \modelname, D is delta attack, and S is StAdv attack \citep{xiao_spatially_2018}. The difference from the original is shown to the right of each example. Combinations of attacks tend to produce less perceptible changes than the attacks do separately.}
\end{figure}

\section{Lipschitz Regularization}

In addition to the regularizations defined in section \ref{reg_tm}, we can also enforce that the perturbation function $f(\cdot)$ in a functional threat model is Lipschitz for some suitably small $\kappa$:

\begin{equation} \label{eq:trans_lipschitz}
    \funcfam{lips} \triangleq \left\{ f: \mathcal{X} \rightarrow \mathcal{X} \mid \forall x_1, x_2 \in \mathcal{X} \  \|\tmfunc(x_1) - \tmfunc(x_2)\| \leq \kappa \|x_1 - x_2\| \right\}
\end{equation}

$\funcfam{lips}$ requires some smoothness in the perturbation function $\tmfunc(\cdot)$, ensuring that similar features in the input are mapped to similar features in the adversarial example. However, one disadvantage of $\funcfam{lips}$ is that it includes constant functions $f(x) = c$, i.e. functions which map every feature to a single value, removing salient features from the input. Thus, we ultimately use $\funcfam{smooth}$ instead. 

\end{document}